\newcommand{\note}[1]{}
\newtheorem{definition}{Definition}[section]
\newtheorem{theorem}{Theorem}[section]
\newtheorem{lemma}{Lemma}[section]
\newtheorem{corollary}{Corollary}[section]
\begin{document}
\title{Blind Image Deblurring by Spectral Properties of Convolution Operators}
\author{Guangcan Liu, Shiyu Chang, and Yi Ma}
\maketitle
\begin{abstract}In this paper, we study the problem of recovering a sharp version of a given blurry image when the blur kernel is unknown. Previous methods often introduce an image-independent regularizer (such as Gaussian or sparse priors) on the desired blur kernel. We shall show that the blurry image itself encodes rich information about the blur kernel. Such information can be found through analyzing and comparing how the spectrum of an image as a convolution operator changes before and after blurring. Our analysis leads to an effective convex regularizer on the blur kernel which depends only on the given blurry image. We show that the minimizer of this regularizer guarantees to give good approximation to the blur kernel if the original image is sharp enough. By combining this powerful regularizer with conventional image deblurring techniques, we show how we could significantly improve the deblurring results through simulations and experiments on real images. In addition, our analysis and experiments help explaining a widely accepted doctrine; that is, the edges are good features for deblurring.
\end{abstract}
\begin{keywords}
deblurring, deconvolution, blur kernel, spectral methods
\end{keywords}
\IEEEdisplaynotcompsoctitleabstractindextext
\IEEEpeerreviewmaketitle
\section{Introduction}
Figure \ref{fig:deblur} shows a very common result of a very blurry image of a car taken by a moving camera --- or similar blurring effect can be observed in many surveillance photos where the camera is static but the car is moving. In many situations, we would like to recover the sharp version of the image so that details of the image (such as the numbers on the license plate) become recognizable to human eyes. It is in general impossible to correctly deblur the whole image if both the camera motion and the scene geometry are both entirely unknown. Nevertheless, if we only consider a small image region (e.g., the license plate area shown in Figure \ref{fig:deblur}), it is reasonable to assume that the blurry image, denoted as $B\in\mathbb{R}^{n_1\times{}n_2}$, is approximately generated by the convolution of a sharp image, denoted as $I_0$, and a generic blur kernel, denoted as $K_0\in\mathbb{R}^{m_1\times{}m_2}$:
\begin{eqnarray*}
B\approx{}I_0\otimes{}K_0,&\textrm{s.t.}& K_0\in\mathcal{S},
\end{eqnarray*}
where $\otimes$ denotes the discrete 2D convolution operator, $S$ denotes the \emph{simplex} of all possible blur kernels (i.e., nonnegative and sums to one), $\{n_1,n_2\}$ are the image sizes, and $\{m_1,m_2\}$ are the sizes of the blur kernel. So, the blind image deblurring problem can be mathematically formulated as the problem of \emph{blind deconvolution} \cite{Lucy:1972:JOCA,Ayers:1988,Kundur:1996:SPM}, which is to recover the sharp image $I_0$ when the blur kernel $K_0$ is unknown\footnote{In contrast, \emph{non-blind deconvolution} \cite{Krishnan:2009:NIPS} is the restoration of $I_0$ when $K_0$ is given.}.
\begin{figure}
\begin{center}
\includegraphics[width=0.48\textwidth]{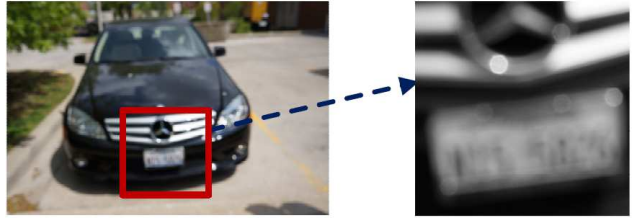}
\caption{\textbf{A scenario of image deblurring.} We take a picture of size $1624\times2448$ by using a NEX-5N camera. The picture is blurry due to an uncontrolled camera motion. We want to process the picture such that human eyes are able to recognize its details (e.g., the plate number).}\label{fig:deblur}
\end{center}
\end{figure}

The blind decovolution problem has been being investigated for several decades widely in optical society, image processing, computer graphics, and computer vision~\cite{Lucy:1972:JOCA,haichao:2011:icme,Cai:2009:CVPR,Neel:2008:CVPR,Levin:2011:TPAMI,Likas:2004:avariational,Wiener:1964:EIS}. Yet, this problem is severely ill-conditioned and still far from being well solved in the more general cases even with the notable progresses made recently (e.g., \cite{Cho:2011:CVPR,Jia:2007:Transprency,Levin:2006:NIPS,Li:2010:ECCV}). A straightforward approach for blind deconvolution is to jointly seek the sharp image $I_0$ and the blur kernel $K_0$ by minimizing
\begin{eqnarray*}
\min_{I,K}\|B-I\otimes{}K\|_F^2,&\textrm{s.t.}& K\in\mathcal{S},
\end{eqnarray*}
where $\|\cdot\|_F$ denotes the Frobenius norm of a matrix. However, this problem is highly ill-conditioned, as it can be perfectly minimized by infinite number of pairs $(I,K)$. For example, the \emph{no-blur explanation} is a perfect solution: $K=\delta$ (a delta function) and $I=B$. Indeed, even if the blur kernel $K_0$ is given, the non-blind deconvolution problem can still be ill-conditioned \cite{Krishnan:2009:NIPS,Tony:1998:TIP}. So, in general, it is necessary to regularize the desired solution for the image variable $I$~\cite{Fergus:2006:RCS}:
\begin{eqnarray}\label{eq:bldconv:regI}
\min_{I,K}\|B-I\otimes{}K\|_F^2+\lambda{}f(I),&\textrm{s.t.}& K\in\mathcal{S},
\end{eqnarray}
where $\lambda\geq0$ is a parameter. The regularizer $f(I)$ is usually chosen as the \emph{total variation} \cite{Jordan:1981:TV} or its variations \cite{Krishnan:2009:NIPS,Tony:1998:TIP,Yu:2000:TIP}. However, as analyzed in \cite{Levin:2011:TPAMI}, such image gradients based regularizer generally favors a blurry solution over a sharp one. More precisely, whenever $f(I)$ is convex, it is easy to prove that (see Section~\ref{sec:proof} for the proof)
\begin{eqnarray}\label{eq:regI:convexprob}
f(I\otimes{}K)\leq{}f(I), \quad \forall{}I\in\mathbb{R}^{n_1\times{}n_2}, \;\forall{}K\in\mathcal{S};
\end{eqnarray}
that is, minimizing $f(I)$ will encourage blurring, and thus the no-blur explanation (i.e., $I=B$ and $K=\delta$) is always favored by minimizing \eqref{eq:bldconv:regI}. Hence, it is also critical and indeed necessary to regularize the kernel $K$, i.e., it is necessary to consider an extended version of of \eqref{eq:bldconv:regI}:
\begin{eqnarray}\label{eq:bldconv:regIK}
\min_{I,K}\|B-I\otimes{}K\|_F^2+\lambda{}f(I)+\alpha{}h(K),\textrm{ s.t. } K\in\mathcal{S},
\end{eqnarray}
where $\lambda\geq0$ and $\alpha\geq0$ are two parameters.

While it is well-recognized that the regularizer $h(K)$ is important and indeed indispensable for blind deconvolution, the existing proposals for $h(K)$, e.g., the Gaussian function $h(K)=\|K\|_F^2$~\cite{Neel:2008:CVPR,Cho:2009:FMD}, the sparse regularizer $h(K)=\|K\|_1$ \cite{Li:2010:ECCV,Shan:2008:HMD} and the Bayesian prior~\cite{Levin:2011:TPAMI}, can work well only on some simple cases where there is no serious blur, but they could not handle more difficult deblurring tasks where the images are severely blurred (e.g., Figure \ref{fig:deblur}). Even more, a specific prior actually may not fit real world blur kernel, which is usually the combination of a sparse curve-like kernel (which corresponds to the camera motion) and a dense Gaussian-like kernel (that models effects such as out of focus).

In this work, we derive a much more effective regularizer for the blur kernel that can significantly benefit the solution of the blind deconvolution problem. This regularizer is based on a simple but important observation about the spectral properties of an image as a convolution operator: For a given image (i.e., matrix), consider its convolution with any other matrix. The convolution defines a linear operator. {\em Then empirically, the spectrum (the set of eigenvalues) of this linear operator for a blurry image is significantly smaller than that for its sharp counter part.} In fact, this can be proven to be true to some extend. Based on this observation, we devise a convex regularizer that tends to be minimized at the true blur kernel, called $K_0$. Namely, given an observed image $B$ represented by a certain image feature $\mathcal{L}$, we deduce a convex function, denoted as $h^{\mathcal{L}(B)}(K)$:
\begin{eqnarray}\label{eq:hbL}
h^{\mathcal{L}(B)}(K): \mathbb{R}^{m_1\times{}m_2}\rightarrow{}\mathbb{R}.
\end{eqnarray}
Unlike most previous work (e.g., \cite{Neel:2008:CVPR,Li:2010:ECCV,Cho:2009:FMD,Shan:2008:HMD}) where the regularizer $h(K)$ is independent of the observed image $B$, our regularizer $h^{\mathcal{L}(B)}(K)$ explicitly depends on the given blurry image and encodes information about how the blurry image $B$ is related to the sharp image $I_0$. It is hence somewhat natural to anticipate that this regularizer $h^{\mathcal{L}(B)}(K)$ would depend on the sharp image too. But rather surprisingly, as we will show, under fairly broad conditions, we can come up with a very effective regularizer $h^{\mathcal{L}(B)}(K)$ that does not depend on any information about the sharp image $I_0$ at all, and the desired kernel $K_0$ can be approximately retrieved by minimizing $h^{\mathcal{L}(B)}(K)$ directly.

Equipped with such new regularizer $h^{\mathcal{L}(B)}(K)$, for the blind deconvolution problem, we jointly seek the sharp image $I_0$ and the blur kernel $K_0$ by solving the optimization problem \eqref{eq:bldconv:regIK}. Experimental performance of our algorithm is a bit surprising: Even when the observed image is blurred to the extent that human eyes cannot recognize its details (e.g., Figure \ref{fig:deblur}), it is still possible for our algorithm to restore a sharp version with recognizable details. In addition to empirical evaluations, theoretical results presented in this paper could also help understanding the blind deconvolution problem: It is known that there are infinite number of ways to decompose a given blurry image $B$ into the convolution of $I$ and $K$. Nevertheless, while $I$ (or its edge map) is assumed to be very sharp, we show that such decomposition is actually unique (or approximately so). Moreover, our analysis provides an explanation for why sharp edges are good features for deblurring. That is, the spectrum of the images in the edge domain is more sensitive to blur than in the raw pixel domain. In summary, the contributions of this paper include:
\begin{itemize}
\item[$\diamond$] We establish a generic kernel regularizer that can be effectual for various blurs such as motion blur and de-focus. Unlike previous approaches which only have mild effects in constraining the solution of blind deconvolution, our regularizer has a strong effect and can even directly retrieve the blur kernel without knowing the latent image.
\item[$\diamond$] Our studies are helpful for understanding the blind deconvolution problem and explaining the widely accepted doctrine that the sharp edges of images are good features for deblurring.
\end{itemize}

\section{Blind Deconvolution via Spectral Properties of Image Convolution}
In this section, we present the details for designing the regularizer $h^{\mathcal{L}(B)}(K)$. The final algorithm for blind deconvolution will be given at the end of this section.
\subsection{Spectrum of a Natural Image as a Convolution Operator}\label{sec:eigen}
\subsubsection{Preliminaries}The concept of \emph{convolution} is well-known. We briefly introduce it for the ease of reading.
Let $X$ and $Y$ are functions of two discrete variables (i.e., $X$ and $Y$ are matrices), then the formula for the 2D convolution of $X$ and $Y$ is
\begin{eqnarray*}
(X\otimes{}Y)(i,j) = \sum_{u,v}X(i-u,j-v)Y(u,v),
\end{eqnarray*}
where $(\cdot)(i,j)$ denotes the $(i,j)$th entry of a matrix. The convolution operator is linear and can be converted into matrix multiplication. Let $\nu(\cdot)$ be the vectorization of a matrix. Then it can be calculated that
\begin{eqnarray}\label{eq:convmatrix}
\nu(X\otimes{}Y) = \mathcal{A}_{k_1,k_2}(X)\nu(Y),\quad \forall{}Y\in\mathbb{R}^{k_1\times{}k_2},
\end{eqnarray}
where $\mathcal{A}_{k_1,k_2}(\cdot)$ is the \emph{Toeplitz matrix} \cite{Filip:2005:TIP} of a matrix, and $k_1,k_2$ are parameters. For an $\ell_1$-by-$\ell_2$ matrix $X$, its Toeplitz matrix, denoted as $\mathcal{A}_{k_1,k_2}(X)$, is of size $(l_1+k_1-1)(l_2+k_2-1)$-by-$k_1k_2$.
\subsubsection{Convolution Eigenvalues and Eigenvectors of Images}
\begin{figure*}
\begin{center}
\includegraphics[width=1\textwidth]{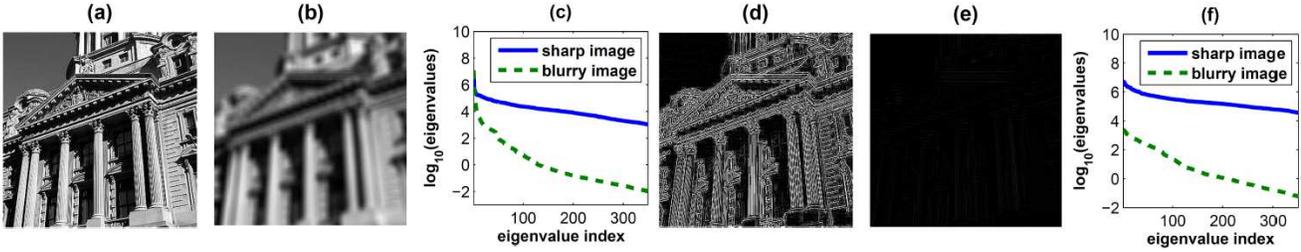}
\caption{\textbf{Example: blurring can significantly decreases the convolution eigenvalues of a sharp image}. (a) A sharp image. (b) The blurry image created by convoluting the sharp image with a Gaussian kernel. (c) Plots of all ($s_1=s_2=18$) convolution eigenvalues of the sharp and blurry images, using $\mathcal{L}=\delta$. (d) The edges of the sharp image, using LoG. (d) The edges of the blurry image (note that the black area contains negative values). (e) Plots of convolution eigenvalues of the sharp and blurry images, choosing $\mathcal{L} = $ LoG. } \label{fig:blureffects}
\end{center}
\end{figure*}
In this work, an image $I$ is treated as a matrix associated with a certain feature filter $\mathcal{L}$:
\begin{eqnarray}\label{eq:LI}
\mathcal{L}(I) = \mathcal{L}\otimes{}I.
\end{eqnarray}
Typically choices for the feature filter $\mathcal{L}$ include $\mathcal{L}=\delta$ (i.e., using original pixel values as features) or $\mathcal{L} = $ ``Laplacian of Gaussian (LoG)'' (i.e., using edge features).

To formally characterize the procedure of transforming sharp images into blurry images, we define the so-called \emph{convolution eigenvalues} and \emph{eigenvectors} as follows:
\begin{definition}\label{def:eigen}For an image $I\in\mathbb{R}^{l_1\times{}l_2}$ represented by a feature filter $\mathcal{L}$, its first convolution eigenvalue, denoted as $\sigma_1^{\mathcal{L}}(I)$ or $\sigma_{max}^{\mathcal{L}}(I)$, is defined by
\begin{eqnarray*}
\sigma_1^{\mathcal{L}}(I)=\max_{X\in\mathbb{R}^{s_1\times{}s_2}}\|\mathcal{L}(I)\otimes{}X\|_F,\textrm{ s.t. } \|X\|_F=1,
\end{eqnarray*}
where $s_1,s_2$ are called ``sampling sizes'' in this work. The maximizer to above problem is called the first convolution eigenvector, denoted as $\kappa_1^{\mathcal{L}}(I)$.

Similarly, the $i$th ($i=2,\ldots,s_1s_2$) convolution eigenvalue $\sigma_i^{\mathcal{L}}(I)$ is defined by
\begin{eqnarray*}\sigma_i^{\mathcal{L}}(I)&=&\max_{X\in\mathbb{R}^{s_1\times{}s_2}}\|\mathcal{L}(I)\otimes{}X\|_F, \\
 &\textrm{s.t.}& \|X\|_F=1, \langle{}X,\kappa_j^{\mathcal{L}}(I)\rangle=0,\forall{}j<i,
\end{eqnarray*}
where $\langle\cdot,\cdot\rangle$ denotes the inner production between two matrices. The maximizer to above problem is the $i$th convolution eigenvector, denoted as $\kappa_i^{\mathcal{L}}(I)$.
\end{definition}

In the following, we summarize some of the properties of the convolution eigenvalues and convolution eigenvectors which will be used later:
\begin{itemize}
\item[-] From \eqref{eq:convmatrix}, it can be seen that the convolution eigenvectors/values are exactly the right singular vectors/values of the Toeplitz matrix. So, for an image $I$ with the associated Toeplitz matrix $\mathcal{A}_{s_1,s_2}(\mathcal{L}(I))$, its all $s_1s_2$ convolution eigenvalues (and eigenvectors) can be found by computing the Singular Value Decomposition (SVD) of $(\mathcal{A}_{s_1,s_2}(\mathcal{L}(I)))^T\mathcal{A}_{s_1,s_2}(\mathcal{L}(I))$.
\item[-] Let $\sigma_{min}^{\mathcal{L}}(\cdot)$ denote the smallest (i.e., last) convolution eigenvalue of an image. Then
\begin{eqnarray}\label{eq:eigenmin}
\|\mathcal{L}(I)\otimes{}X\|_F\geq\sigma_{min}^{\mathcal{L}}(I)>0,\forall{}\|X\|_F=1,
\end{eqnarray}
which implies that the Toeplitz matrix is always nonsingular except the extreme case of $I=0$.
\item[-] If $B=I\otimes{}K$ and $K\in\mathcal{S}$, then (detailed proofs are in Section~\ref{sec:proof})
\begin{eqnarray}\label{eq:eigen:ineqs}
\sigma_{i}^{\mathcal{L}}(B)\leq\sigma_{i}^{\mathcal{L}}(I), \forall{} i=1,\ldots,s_1s_2;
\end{eqnarray}
that is, the blurring effects generally reduce the convolution eigenvalues of an image. When the original image is sharp, the reduction amount can be very significant, as exemplified in Figure \ref{fig:blureffects}. In particular, when the edge features are used, Figure \ref{fig:blureffects}(f) shows that it is even possible to have $\sigma_{min}^{\mathcal{L}}(I)\gg\sigma_{max}^{\mathcal{L}}(B)$.
\end{itemize}

The properties in inequality \eqref{eq:eigen:ineqs} suggest that a suitable way to formally define the concept of \emph{sharp image}, which appears frequently in the articles related to deconvolution, would be the following: An image $I$ is called \emph{$\tau$-sharp} if and only if $\sigma_{min}^{\mathcal{L}}(I) \geq\tau$, where $\tau>0$ is a parameter. That is, in general, the convolution eigenvalues of sharp images are large, while those of blurry images are relatively smaller.
\subsection{A Convex Blur Kernel Regularizer}
In this subsection, we derive a convex regularizer, denoted as $h^{\mathcal{L}(B)}(K)$, which tends to have the minimal value at the desired blur kernel $K_0$.
\subsubsection{Derivation}
For ease of exploration, we begin with the simple case that the blurry image $B$ is exactly generated by the convolution of $I_0$ and $K_0$, i.e., there is no noise: $B = I_0 \otimes{}K_0$. As convolution operators are associative and commutative, the effect of feature extraction is
\begin{eqnarray}\label{eq:conv:edge}
\mathcal{L}(B) = \mathcal{L}(I_0)\otimes{}K_0,
\end{eqnarray}
where $\mathcal{L}$ is a predefined feature filter. For the rest of this paper, we consistently choose $\mathcal{L} = $ LoG, i.e., we use edge features by default (but our analysis applies to any convolution filters).

By Definition \ref{def:eigen},
\begin{eqnarray*}
\|\mathcal{L}(B)\otimes{}\kappa_i^{\mathcal{L}}(B)\|_F=\sigma_i^{\mathcal{L}}(B), \; \forall{1\leq{}i\leq{}s_1s_2},
\end{eqnarray*}
where $\kappa_i^{\mathcal{L}}(B)\in\mathbb{R}^{s_1\times{}s_2}$, and the sampling sizes $\{s_1,s_2\}$ are taken as parameters. By \eqref{eq:conv:edge}, we have
\begin{eqnarray*}
\|(\mathcal{L}(I_0)\otimes{}K_0)\otimes{}\kappa_i^{\mathcal{L}}(B)\|_F=\sigma_i^{\mathcal{L}}(B), \forall{1\leq{}i\leq{}s_1s_2}.
\end{eqnarray*}
Since the convolution operator is linear, we further have
\begin{eqnarray*}
\Big\|\mathcal{L}(I_0)\otimes{}\frac{K_0\otimes\kappa_i^{\mathcal{L}}(B)}{\|K_0\otimes\kappa_i^{\mathcal{L}}(B)\|_F}\Big\|_F=\frac{\sigma_i^{\mathcal{L}}(B)}{\|K_0\otimes\kappa_i^{\mathcal{L}}(B)\|_F}.
\end{eqnarray*}
Notice that $\big\|\frac{K_0\otimes\kappa_i^{\mathcal{L}}(B)}{\|K_0\otimes\kappa_i^{\mathcal{L}}(B)\|_F}\big\|_F=1$. By \eqref{eq:eigenmin}, $\|\mathcal{L}(I_0)\otimes{}X\|_F\geq\sigma_{min}^{\mathcal{L}}(I_0),\forall{}\|X\|_F=1$. Thus we have the following necessary conditions that constrain possible $K_0$:
\begin{eqnarray}\label{eq:temp1}
\|K_0\otimes\kappa_i^{\mathcal{L}}(B)\|_F\leq\frac{\sigma_i^{\mathcal{L}}(B)}{\sigma_{min}^{\mathcal{L}}(I_0)}, \; \forall{1\leq{}i\leq{}s_1s_2}.
\end{eqnarray}

Define a function on the kernel $K$ as  $$h^{\mathcal{L}(B)}(K) \doteq \sum_{i=1}^{s_1s_2}\frac{\|K\otimes\kappa_i^{\mathcal{L}}(B)\|_F^2}{(\sigma_i^{\mathcal{L}}(B))^2}.$$
Then \eqref{eq:temp1} implies that
\begin{eqnarray}\label{eq:prior:k0}
h^{\mathcal{L}(B)}(K_0)\leq{}\frac{s_1s_2}{(\sigma_{min}^{\mathcal{L}}(I_0))^2}.
\end{eqnarray}
Note that $\sigma_i^{\mathcal{L}}(B)$ could be significantly smaller than $\sigma_{min}^{\mathcal{L}}(I_0)$ (see Figure \ref{fig:blureffects}), and a randomly chosen blur kernel may not satisfy \eqref{eq:temp1} or subsequently \eqref{eq:prior:k0}. Hence, the desired blur kernel $K_0$ should have relatively smaller value for $h^{\mathcal{L}(B)}(K)$. Or we could try to obtain an approximate estimate of the desired blur kernel $K_0$ by minimizing:
\begin{eqnarray}\label{eq:quadprog}
\hat{K}_0 = \arg\min_{K} h^{\mathcal{L}(B)}(K), &\textrm{s.t.}&K\in\mathcal{S},
\end{eqnarray}
where $S$ denotes the $(m_1m_2-1)$-dimensional simplex, and $m_1,m_2$ are the sizes of the blur kernel.

It is easy to see that $h^{\mathcal{L}(B)}(K)$ is a quadratical (hence convex) function, namely $h^{\mathcal{L}(B)}(K)=(\nu(K))^TH\nu(K)$ with the Hessian matrix $H$ given by
\begin{eqnarray}\label{eq:h}
H =\sum_{i=1}^{s_1s_2}\frac{(\mathcal{A}_{m_1,m_2}(\kappa_i^{\mathcal{L}}(B)))^T\mathcal{A}_{m_1,m_2}(\kappa_i^{\mathcal{L}}(B))}{(\sigma_i^{\mathcal{L}}(B))^2},
\end{eqnarray}
where $\mathcal{A}_{m_1,m_2}(\kappa_i^{\mathcal{L}}(B))$ is the Toeplitz matrix of the $i$th convolution eigenvector of $B$.
\subsubsection{Analysis}\label{sec:analysis}
We now study how effective the regularizer $h^{\mathcal{L}(B)}(K)$ is by evaluating how close its minimizer $\hat{K}_0$ is to the true kernel $K_0$ (proofs to all the theories can be found in Section~\ref{sec:proof}).
\begin{theorem}[Noiseless]\label{the:recovery:noiseless}
Suppose $B=I_0\otimes{}K_0, K_0\in\mathcal{S}$, and $I_0\neq0$. For the kernel $\hat{K}_0$ estimated by \eqref{eq:quadprog}, we have that
\begin{eqnarray*}
\|K_0-\hat{K}_0\|_F\leq\sqrt{2}\frac{\sigma_{max}^{\mathcal{L}}(B)}{\sigma_{min}^{\mathcal{L}}(I_0)}.
\end{eqnarray*}
\end{theorem}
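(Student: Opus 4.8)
The plan is to sandwich the quadratic form $h^{\mathcal{L}(B)}$ between a clean lower bound in terms of $\|K\|_F$ and the explicit upper bound \eqref{eq:prior:k0} already available at $K_0$, and then to exploit the fact that both $K_0$ and $\hat K_0$ live on the simplex $\mathcal{S}$.

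First I would establish a lower bound $h^{\mathcal{L}(B)}(K) \geq \frac{s_1 s_2}{(\sigma_{max}^{\mathcal{L}}(B))^2}\|K\|_F^2$ valid for every $K$. Since $\sigma_i^{\mathcal{L}}(B) \leq \sigma_{max}^{\mathcal{L}}(B)$ for all $i$, replacing each denominator in the definition of $h^{\mathcal{L}(B)}$ by $\sigma_{max}^{\mathcal{L}}(B)$ only decreases the sum, so it suffices to prove the identity $\sum_{i=1}^{s_1 s_2}\|K \otimes \kappa_i^{\mathcal{L}}(B)\|_F^2 = s_1 s_2 \|K\|_F^2$. This is the heart of the argument and the step I expect to need the most care. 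It follows from two facts: the convolution eigenvectors $\{\kappa_i^{\mathcal{L}}(B)\}$ are the right singular vectors of a Toeplitz matrix and hence form an orthonormal basis of $\mathbb{R}^{s_1\times s_2}$; and, writing $\nu(K \otimes \kappa_i^{\mathcal{L}}(B)) = \mathcal{A}_{s_1,s_2}(K)\,\nu(\kappa_i^{\mathcal{L}}(B))$ via \eqref{eq:convmatrix}, the sum becomes $\sum_i \|\mathcal{A}_{s_1,s_2}(K)\,\nu(\kappa_i^{\mathcal{L}}(B))\|_2^2 = \|\mathcal{A}_{s_1,s_2}(K)\|_F^2$, because $\sum_i \|M e_i\|_2^2 = \|M\|_F^2$ for any orthonormal basis $\{e_i\}$. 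Finally, every column of $\mathcal{A}_{s_1,s_2}(K)$ is a shifted, zero-padded copy of the entries of $K$, so each of its $s_1 s_2$ columns has squared norm $\|K\|_F^2$, giving $\|\mathcal{A}_{s_1,s_2}(K)\|_F^2 = s_1 s_2\|K\|_F^2$.

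Next I would combine this bound with the optimality of $\hat K_0$. Because $K_0\in\mathcal{S}$ is feasible for \eqref{eq:quadprog} and $\hat K_0$ is its minimizer, we have $h^{\mathcal{L}(B)}(\hat K_0) \leq h^{\mathcal{L}(B)}(K_0)$, and \eqref{eq:prior:k0} bounds the right-hand side by $s_1 s_2/(\sigma_{min}^{\mathcal{L}}(I_0))^2$. Feeding both into the lower bound yields $\|K_0\|_F^2 \leq (\sigma_{max}^{\mathcal{L}}(B))^2/(\sigma_{min}^{\mathcal{L}}(I_0))^2$ and, identically, $\|\hat K_0\|_F^2 \leq (\sigma_{max}^{\mathcal{L}}(B))^2/(\sigma_{min}^{\mathcal{L}}(I_0))^2$.

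To conclude I would expand $\|K_0 - \hat K_0\|_F^2 = \|K_0\|_F^2 - 2\langle K_0, \hat K_0\rangle + \|\hat K_0\|_F^2$ and discard the cross term: since both $K_0$ and $\hat K_0$ lie in the simplex they are entrywise nonnegative, so $\langle K_0, \hat K_0\rangle \geq 0$. Hence $\|K_0 - \hat K_0\|_F^2 \leq \|K_0\|_F^2 + \|\hat K_0\|_F^2 \leq 2(\sigma_{max}^{\mathcal{L}}(B))^2/(\sigma_{min}^{\mathcal{L}}(I_0))^2$, and taking square roots gives the claimed bound. It is precisely this use of nonnegativity to drop the cross term that produces the factor $\sqrt{2}$ rather than a larger constant (a naive triangle inequality would give $2$ instead).
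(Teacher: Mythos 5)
Your proof is correct, and although it rests on the same key lemma as the paper --- the coercivity bound $h^{\mathcal{L}(B)}(K)\geq \frac{s_1s_2}{(\sigma_{max}^{\mathcal{L}}(B))^2}\|K\|_F^2$, which both you and the paper (Lemma 5.1) derive from the orthonormality of the convolution eigenvectors combined with the fact that every column (in your version) or row (in the paper's version, via the binary matrices $\Omega_j$) of a Toeplitz matrix is a zero-padded copy of the underlying entries --- your endgame is genuinely different, and in fact tighter logically. The paper applies the lower bound to the difference $K_0-\hat{K}_0$ and controls $h^{\mathcal{L}(B)}(K_0-\hat{K}_0)$ through the subadditivity claim $h^{\mathcal{L}(B)}(K_0-\hat{K}_0)\leq h^{\mathcal{L}(B)}(K_0)+h^{\mathcal{L}(B)}(\hat{K}_0)$, attributed to ``convexity''. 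That step is delicate: for a positive semidefinite quadratic form, $h(X-Y)=h(X)-2\nu(X)^TH\nu(Y)+h(Y)$, so the claimed inequality requires the cross term $\nu(K_0)^TH\nu(\hat{K}_0)$ to be nonnegative, which is not obvious because the weighted Hessian $H$ (unlike the unweighted $H_1=s_1s_2\mathcal{I}$) can have negative off-diagonal entries; plain convexity alone only gives $h(X-Y)\leq 2h(X)+2h(Y)$, which would degrade the constant from $\sqrt{2}$ to $2$. You sidestep this entirely: you apply the lower bound to $K_0$ and $\hat{K}_0$ separately, obtaining $\|K_0\|_F^2,\|\hat{K}_0\|_F^2\leq(\sigma_{max}^{\mathcal{L}}(B))^2/(\sigma_{min}^{\mathcal{L}}(I_0))^2$, and then drop the cross term in $\|K_0-\hat{K}_0\|_F^2$ using the entrywise nonnegativity of simplex elements, $\langle K_0,\hat{K}_0\rangle\geq 0$. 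This relocates the sign argument from the $H$-inner product (where it is unjustified) to the Euclidean inner product (where the simplex constraint makes it immediate), so your route reaches the same $\sqrt{2}$ bound while repairing the one questionable step in the paper's own proof.
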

\begin{figure}
\begin{center}
\includegraphics[width=0.48\textwidth]{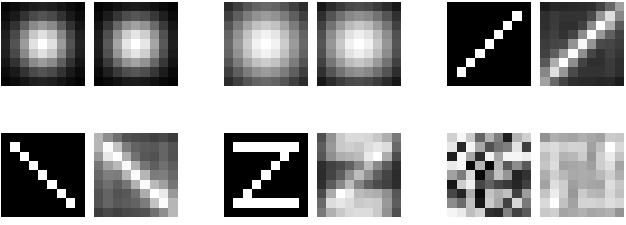}
\caption{\textbf{Demonstrating the effectiveness of the regularizer $h^{\mathcal{L}(B)}(K)$, using six synthetical examples.} Left: The blur kernel with size $9\times9$. Right: The kernel estimated by solving \eqref{eq:quadprog}.}\label{fig:res:ker}
\end{center}
\end{figure}
The above theorem illustrates that the edges are better than the raw pixels as a feature for the recovery of blur kernel, because the edge features can achieve a smaller bound for the estimate error (see Figure \ref{fig:blureffects}). This to some extent corroborates the previous observations (e.g., \cite{Cho:2011:CVPR,Li:2010:ECCV,Cho:2009:FMD,Fergus:2006:RCS,Shan:2008:HMD}) that the edge feature is a good choice for image deblurring. Note that in general $\sigma_{min}^{\mathcal{L}}(I_0)$ can be significantly larger than $\sigma_{max}^{\mathcal{L}}(B)$. For the example shown in Figure \ref{fig:blureffects}(f),  we have computed the ratio $\sigma_{max}^{\mathcal{L}}(B)/\sigma_{min}^{\mathcal{L}}(I_0)=0.15$, then the above theorem suggests that the estimation error is upper bounded by 0.21. Figure \ref{fig:res:ker} shows some more simulated examples of blur kernels directly estimated from a blurred Lena image, compared with the true kernels. Notice, that these kernels are obtained without any knowledge about the original image $I_0$ at all! This puts a strongly correct prior on the desired kernel entirely based on the given blurry image. These simulated results clearly demonstrate the effectiveness of the proposed regularizer $h^{\mathcal{L}(B)}(K)$.

Without any restrictions, it is known that there are infinite number of ways to decompose a blurry image $B$ into the convolution of an image $I$ and a blur kernel $K$. Interestingly, when the image $I$ (or its edge image) is assumed to be very sharp (i.e., the smallest convolution eigenvalue is large), Theorem \ref{the:recovery:noiseless} suggests that the decomposition for given image $B$ tends to be unique in a way that the allowable kernels should be very close to each other. The following corollary makes this statement more precise: We call an image $I$ is $\tau$-sharp if $ \sigma_{min}^{\mathcal{L}}(I)\geq\tau$ with $ \tau= 2\sqrt{2}\sigma_{max}^{\mathcal{L}}(B)/\varepsilon$ for some small number $\varepsilon$. Then we have the following:
\begin{corollary}\label{cor:unique} Denote all possible $\tau$-sharp decompositions of a blurry image $B$ as $\Omega_{B}^{\varepsilon}=\{(K,I)|I\otimes{}K = B,K\in\mathcal{S},\sigma_{min}^{\mathcal{L}}(I)\geq\sqrt{2}\sigma_{max}^{\mathcal{L}}(B)/\varepsilon\}$. For any two pairs $(K_0',I_0'),(K_0'',I_0'')\in\Omega_{B}^{\varepsilon}$, we have that
\begin{eqnarray*}
\|K_0'-K_0''\|_F\leq\varepsilon,
\end{eqnarray*}
where $\varepsilon>0$ is any small parameter.
\end{corollary}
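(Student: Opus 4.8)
The plan is to derive the corollary directly from Theorem~\ref{the:recovery:noiseless} by inserting a common reference kernel and then closing with the triangle inequality. The decisive observation is that the minimizer $\hat{K}_0$ of \eqref{eq:quadprog} is determined \emph{solely} by the observed image $B$: the regularizer $h^{\mathcal{L}(B)}(K)$ is assembled only from the convolution eigenvalues $\sigma_i^{\mathcal{L}}(B)$ and eigenvectors $\kappa_i^{\mathcal{L}}(B)$ of $B$, with no reference to any particular factorization. Hence a single kernel $\hat{K}_0$ can play the role of an anchor for \emph{both} pairs $(K_0',I_0')$ and $(K_0'',I_0'')$ at once, even though their latent images differ.

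First I would verify that Theorem~\ref{the:recovery:noiseless} applies to each decomposition. Membership in $\Omega_B^{\varepsilon}$ forces $\sigma_{min}^{\mathcal{L}}(I_0')\geq\sqrt{2}\sigma_{max}^{\mathcal{L}}(B)/\varepsilon>0$ and likewise for $I_0''$; by \eqref{eq:eigenmin} a vanishing image would give $\sigma_{min}^{\mathcal{L}}=0$, so both $I_0'\neq0$ and $I_0''\neq0$. Together with $B=I_0'\otimes{}K_0'=I_0''\otimes{}K_0''$ and $K_0',K_0''\in\mathcal{S}$, the hypotheses of the theorem are met by each pair. Applying the theorem twice against the common minimizer $\hat{K}_0$ then yields
\begin{eqnarray*}
\|K_0'-\hat{K}_0\|_F &\leq& \sqrt{2}\frac{\sigma_{max}^{\mathcal{L}}(B)}{\sigma_{min}^{\mathcal{L}}(I_0')}, \\
\|K_0''-\hat{K}_0\|_F &\leq& \sqrt{2}\frac{\sigma_{max}^{\mathcal{L}}(B)}{\sigma_{min}^{\mathcal{L}}(I_0'')}.
\end{eqnarray*}
Substituting the sharpness lower bound into each denominator caps both right-hand sides, and the triangle inequality $\|K_0'-K_0''\|_F\leq\|K_0'-\hat{K}_0\|_F+\|\hat{K}_0-K_0''\|_F$ delivers the claimed bound.

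I do not anticipate a genuine difficulty, as the result is a corollary in the literal sense; the only substantive step is recognizing that $\hat{K}_0$ is factorization-independent, which converts two unrelated estimates into distances measured from one fixed point. The single delicate issue is the numerical constant: the triangle inequality doubles the per-decomposition error, so to close the final bound at exactly $\varepsilon$ one must use the $\tau$-sharp threshold $\sigma_{min}^{\mathcal{L}}(\cdot)\geq 2\sqrt{2}\sigma_{max}^{\mathcal{L}}(B)/\varepsilon$ stated just before the corollary, under which each estimate is $\leq\varepsilon/2$. If instead the looser threshold $\sqrt{2}\sigma_{max}^{\mathcal{L}}(B)/\varepsilon$ appearing in the definition of $\Omega_B^{\varepsilon}$ is taken verbatim, the same argument yields only $\|K_0'-K_0''\|_F\leq 2\varepsilon$; I would therefore flag the missing factor of two in $\Omega_B^{\varepsilon}$ as a point to reconcile before finalizing the constant.
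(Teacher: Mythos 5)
Your proof is correct and takes essentially the same route as the paper's: both anchor the two kernels at the common minimizer $\hat{K}_0$ of \eqref{eq:quadprog} (noting it depends only on $B$, not on any particular factorization), apply Theorem~\ref{the:recovery:noiseless} to each decomposition, and close with the triangle inequality. Your flag about the constant is also warranted: the paper's own proof bounds each term by $\varepsilon/2$, which silently uses the threshold $2\sqrt{2}\sigma_{max}^{\mathcal{L}}(B)/\varepsilon$ from the text preceding the corollary rather than the $\sqrt{2}\sigma_{max}^{\mathcal{L}}(B)/\varepsilon$ written in the definition of $\Omega_{B}^{\varepsilon}$, so with the set defined as stated the argument indeed only yields $\|K_0'-K_0''\|_F\leq 2\varepsilon$.
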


Theorem\ref{the:recovery:noiseless} is based on the assumption that there is no noise, i.e., $B=I_0\otimes{}K_0$. Due to the fact the the blur in reality may not be uniform within an image, a more appropriate model is that $B=I_0\otimes{}K_0+N$, where $N$ denotes unknown noise (errors). In this case, we have the following theorem to bound the estimate error.
\begin{theorem}[Noisy]\label{the:recovery:noisy}
Suppose $B=I_0\otimes{}K_0+N, K_0\in\mathcal{S}$, $\|\mathcal{L}(N)\|_F\leq\epsilon$, and $I_0\neq0$. For the kernel $\hat{K}_0$ estimated by \eqref{eq:quadprog}, we have that
\begin{eqnarray*}
\|K_0-\hat{K}_0\|_F\leq\sqrt{2}\frac{\sigma_{max}^{\mathcal{L}}(B)+ccond(B)\sqrt{s_1s_2}\epsilon}{\sigma_{min}^{\mathcal{L}}(I_0)},
\end{eqnarray*}
where $ccond(B)=\sigma_{max}^{\mathcal{L}}(B)/\sigma_{min}^{\mathcal{L}}(B)$ is the ``convolution condition number'' of $B$.
\end{theorem}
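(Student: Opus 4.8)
The plan is to rerun the argument behind the noiseless Theorem~\ref{the:recovery:noiseless}, but to carry an additive error term through every estimate. The starting point is the feature-domain identity $\mathcal{L}(B) = \mathcal{L}(I_0)\otimes K_0 + \mathcal{L}(N)$, which holds because $\mathcal{L}$ acts by convolution and convolution is linear, associative, and commutative. Fixing an eigenvector $\kappa_i^{\mathcal{L}}(B)$ and convolving this identity with it, I would use $\|\mathcal{L}(B)\otimes\kappa_i^{\mathcal{L}}(B)\|_F=\sigma_i^{\mathcal{L}}(B)$ from Definition~\ref{def:eigen} together with the triangle inequality to get $\|\mathcal{L}(I_0)\otimes(K_0\otimes\kappa_i^{\mathcal{L}}(B))\|_F\le\sigma_i^{\mathcal{L}}(B)+\|\mathcal{L}(N)\otimes\kappa_i^{\mathcal{L}}(B)\|_F$. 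Normalizing $K_0\otimes\kappa_i^{\mathcal{L}}(B)$ and invoking \eqref{eq:eigenmin} exactly as in the noiseless derivation then yields the perturbed analogue of \eqref{eq:temp1}, namely $\|K_0\otimes\kappa_i^{\mathcal{L}}(B)\|_F\le(\sigma_i^{\mathcal{L}}(B)+\|\mathcal{L}(N)\otimes\kappa_i^{\mathcal{L}}(B)\|_F)/\sigma_{min}^{\mathcal{L}}(I_0)$.

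The key new ingredient is to control the noise convolutions $\|\mathcal{L}(N)\otimes\kappa_i^{\mathcal{L}}(B)\|_F$. Since the eigenvectors $\{\kappa_i^{\mathcal{L}}(B)\}_{i=1}^{s_1s_2}$ are the orthonormal right singular vectors of a Toeplitz matrix, I would first establish the identity $\sum_{i}\|M\otimes\kappa_i^{\mathcal{L}}(B)\|_F^2=s_1s_2\|M\|_F^2$ for an arbitrary matrix $M$ (a full convolution of $M$ against a single-entry matrix preserves the Frobenius norm, so the Toeplitz matrix of $M$ has squared Frobenius norm $s_1s_2\|M\|_F^2$, and orthonormality transfers this to the $\kappa_i$ basis). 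Taking $M=\mathcal{L}(N)$ and bounding each term crudely by the full sum gives $\|\mathcal{L}(N)\otimes\kappa_i^{\mathcal{L}}(B)\|_F\le\sqrt{s_1s_2}\,\|\mathcal{L}(N)\|_F\le\sqrt{s_1s_2}\,\epsilon$; this is exactly where the factor $\sqrt{s_1s_2}\,\epsilon$ enters. Substituting into the per-index bound and using $\sigma_i^{\mathcal{L}}(B)\ge\sigma_{min}^{\mathcal{L}}(B)$, each summand of $h^{\mathcal{L}(B)}(K_0)$ is at most $(1+\sqrt{s_1s_2}\,\epsilon/\sigma_{min}^{\mathcal{L}}(B))^2/(\sigma_{min}^{\mathcal{L}}(I_0))^2$, so summing the $s_1s_2$ terms yields $h^{\mathcal{L}(B)}(K_0)\le\frac{s_1s_2}{(\sigma_{min}^{\mathcal{L}}(I_0))^2}\big(1+\sqrt{s_1s_2}\,\epsilon/\sigma_{min}^{\mathcal{L}}(B)\big)^2$.

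To convert this into the claimed bound on $\|K_0-\hat{K}_0\|_F$ I would combine three facts. First, the same orthonormality identity together with $\sigma_i^{\mathcal{L}}(B)\le\sigma_{max}^{\mathcal{L}}(B)$ gives the lower bound $h^{\mathcal{L}(B)}(K)\ge\frac{s_1s_2}{(\sigma_{max}^{\mathcal{L}}(B))^2}\|K\|_F^2$. Second, because $K_0,\hat{K}_0\in\mathcal{S}$ have nonnegative entries we have $\langle K_0,\hat{K}_0\rangle\ge0$, hence $\|K_0-\hat{K}_0\|_F^2\le\|K_0\|_F^2+\|\hat{K}_0\|_F^2$. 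Third, optimality of $\hat{K}_0$ gives $h^{\mathcal{L}(B)}(\hat{K}_0)\le h^{\mathcal{L}(B)}(K_0)$. Chaining these, $\|K_0-\hat{K}_0\|_F^2\le\frac{(\sigma_{max}^{\mathcal{L}}(B))^2}{s_1s_2}\big(h^{\mathcal{L}(B)}(K_0)+h^{\mathcal{L}(B)}(\hat{K}_0)\big)\le\frac{2(\sigma_{max}^{\mathcal{L}}(B))^2}{s_1s_2}h^{\mathcal{L}(B)}(K_0)$, which is the source of the $\sqrt2$. Inserting the bound on $h^{\mathcal{L}(B)}(K_0)$, taking square roots, and recognizing $\sigma_{max}^{\mathcal{L}}(B)/\sigma_{min}^{\mathcal{L}}(B)=ccond(B)$ reproduces the stated estimate.

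I expect the main obstacle to be the treatment of the noise term: making the identity $\sum_i\|M\otimes\kappa_i^{\mathcal{L}}(B)\|_F^2=s_1s_2\|M\|_F^2$ precise and deciding how tightly to bound the individual $\|\mathcal{L}(N)\otimes\kappa_i^{\mathcal{L}}(B)\|_F$. Bounding each term by the whole $\ell_2$ sum is what produces the (possibly loose) $\sqrt{s_1s_2}$ factor and introduces $ccond(B)$ via $\sigma_i^{\mathcal{L}}(B)\ge\sigma_{min}^{\mathcal{L}}(B)$; a sharper Minkowski bound on the vector $(\|\mathcal{L}(N)\otimes\kappa_i^{\mathcal{L}}(B)\|_F)_i$ would shrink this factor but complicate the algebra. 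A secondary subtlety, inherited from the noiseless proof, is that \eqref{eq:eigenmin} is applied to $K_0\otimes\kappa_i^{\mathcal{L}}(B)$, whose size exceeds $s_1\times s_2$, so the sampling sizes must be chosen consistently for that step to be valid.
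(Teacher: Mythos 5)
Your proof is correct and arrives at exactly the stated bound, and its skeleton matches the paper's: perturb the necessary condition \eqref{eq:temp1} by the noise term, show $\|\mathcal{L}(N)\otimes\kappa_i^{\mathcal{L}}(B)\|_F\leq\sqrt{s_1s_2}\,\epsilon$, bound $h^{\mathcal{L}(B)}(K_0)$ by $s_1s_2(1+\sqrt{s_1s_2}\epsilon/\sigma_{min}^{\mathcal{L}}(B))^2/(\sigma_{min}^{\mathcal{L}}(I_0))^2$, and then rerun the noiseless argument. However, you handle the two supporting steps differently, and in both cases your route is arguably sounder. (i) For the noise term, the paper proves a separate lemma $\|X\otimes Y\|_F\leq\|X\|_F\|Y\|_1$ (splitting $Y$ into signed parts and invoking \eqref{eq:regI:convexprob}) and combines it with $\|\kappa_i^{\mathcal{L}}(B)\|_1\leq\sqrt{s_1s_2}$; you instead use the Parseval-type identity $\sum_i\|M\otimes\kappa_i^{\mathcal{L}}(B)\|_F^2=s_1s_2\|M\|_F^2$ --- the same fact ($\sum_j\Omega_j^T\Omega_j=s_1s_2\mathcal{I}$) that drives Lemma~\ref{lemma:H} --- and bound one term by the whole sum. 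Both yield the same $\sqrt{s_1s_2}\,\epsilon$ factor, but yours reuses existing machinery instead of adding a lemma. (ii) More significantly, to convert the bound on $h^{\mathcal{L}(B)}(K_0)$ into a bound on $\|K_0-\hat{K}_0\|_F$, the paper (in the proof of Theorem~\ref{the:recovery:noiseless} that it invokes) asserts $h^{\mathcal{L}(B)}(K_0-\hat{K}_0)\leq h^{\mathcal{L}(B)}(K_0)+h^{\mathcal{L}(B)}(\hat{K}_0)$ ``by convexity''; for a positive semidefinite quadratic form this actually requires the cross term $(\nu(K_0))^TH\nu(\hat{K}_0)\geq0$, which convexity alone does not give (it only yields a bound weaker by a factor of $2$, hence a final constant $2$ instead of $\sqrt{2}$). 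You sidestep this entirely by applying the lower bound of Lemma~\ref{lemma:H} to $K_0$ and $\hat{K}_0$ separately and using the entrywise nonnegativity of simplex elements to get $\|K_0-\hat{K}_0\|_F^2\leq\|K_0\|_F^2+\|\hat{K}_0\|_F^2$, which recovers the claimed constant $\sqrt{2}$ rigorously. Finally, the size mismatch you flag --- applying \eqref{eq:eigenmin} to the normalized $K_0\otimes\kappa_i^{\mathcal{L}}(B)$, whose dimensions exceed $s_1\times s_2$ --- is indeed present in the paper's own derivation of \eqref{eq:temp1}, so it is an issue you inherit rather than introduce.
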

\subsection{Reliable Deconvolution via Alternating Minimization}
While elegant and effectual, as can be seen from Figure~\ref{fig:res:ker}, the convex program \eqref{eq:quadprog} alone may not produce a satisfactory kernel which is good enough for restoring a sharp, natural image --- We have experimentally confirmed that the blur kernel from \eqref{eq:quadprog} often leads to very sharp, but unnatural images full of artifacts. For reliable deconvolution, we turn to use the proposed regularizer $h^{\mathcal{L}(B)}(K)$ as an additional term to constrain the deconvolution problem. Namely, we jointly seek the sharp image $I_0$ and the blur kernel $K_0$ by minimizing the following objective function:
\begin{align*}
\min_{I,K}\|B&-I\otimes{}K\|_F^2\\
&+\lambda\|\nabla{}I\|_1+\alpha{}h^{\mathcal{L}(B)}(K),\textrm{s.t. }K\in\mathcal{S},
\end{align*}
where is to choose $f(I)=\|\nabla{}I\|_1$ (i.e., total variation) and $h(K)=h^{\mathcal{L}(B)}(K)$ in \eqref{eq:bldconv:regIK}.

Although the above problem is nonconvex in nature and may heavily depend on the initial solution chosen in advance, it is now equipped with our strong kernel regularizer, $h^{\mathcal{L}(B)}(K)$, and thus the initialization step is no long crucial: Unlike the previous algorithms (e.g., \cite{Neel:2008:CVPR,Cho:2011:CVPR,Li:2010:ECCV,Cho:2009:FMD,Shan:2008:HMD}) which need to carefully chose the initial solution, we simply choose the observed blurry image $B$ to be the initial condition for $I$.
\begin{itemize}
\item While fixing the variable $I$, the blur kernel $K$ is updated by solving $\min_{K}\|B-I\otimes{}K\|_F^2+\alpha{}h^{\mathcal{L}(B)}(K)$, s.t. $K\in\mathcal{S}$, which is equal to the following quadratical programming:
\begin{align*}
&\min_{K}\|\nu(B)-\mathcal{A}_{m_1,m_2}(I)\nu(K)\|^2+\alpha{}(\nu(K))^TH\nu(K),\\
&\textrm{s.t.  }K\in\mathcal{S},
\end{align*}
where the Hessian matrix $H$ is computed by \eqref{eq:h}, $\|\cdot\|$ is the $\ell_2$-norm of a vector, and $\nu(\cdot)$ denotes the vectorization of a matrix.
\item While fixing the blur kernel $K$, the estimate of the image $I$ is updated by
\begin{eqnarray*}
\min_{I}\|B-I\otimes{}K\|_F^2+\lambda\|\nabla{}I\|_1,
\end{eqnarray*}
which can be solved by any of the many non-blind deconvolution algorithms developed in the literature (e.g.,~\cite{Krishnan:2009:NIPS,Yuan:2008:atg,Cho:2011:HON}). In this paper, we simply use the fast method introduced by Krishnan and Fergus~\cite{Krishnan:2009:NIPS}.
\end{itemize}
Furthermore, unlike previous blind deblurring methods that need to carefully control the number of iterations, we run the iterations until convergence. Usually, our algorithm needs about 100 iterations to converge.\\
\\\textbf{On Choosing the Parameters.} There are six parameters in total: The kernel sizes $\{m_1,m_2\}$ (usually $m_1=m_2$), the sampling sizes $\{s_1,s_2\}$, and the trade-off parameters $\alpha,\lambda$. Three of them, including $\{m_1,m_2\}$ and $\alpha$, need be set carefully. The kernel sizes $\{m_1,m_2\}$ are better to be a bit larger than the true blur size, which requires trying the algorithm several times to determine. It also needs some efforts to choose the parameter $\alpha$. When $\alpha$ is too small, our algorithm will always converge to the no-blur explanation (i.e., $I=B,K=\delta$); when $\alpha$ is set to be very large, the recovered image is very sharp, but unnatural. Usually, there exists such a threshold $\alpha^*$: The optimal solution is always $(I=B,K=\delta)$ while $\alpha<\alpha^*$; the solution is often satisfactory while $\alpha$ is slightly larger than $\alpha^*$.

After the kernel sizes being determined, we consistently set the sampling sizes as $s_1=1.5m_1$ and $s_2=1.5m_2$. The parameter $\lambda$ needs no extensive adjustments. In our experiments, this parameter is chosen from the range of 0.001 to 0.002.\\
\begin{figure*}
\begin{center}
\includegraphics[width=0.9\textwidth]{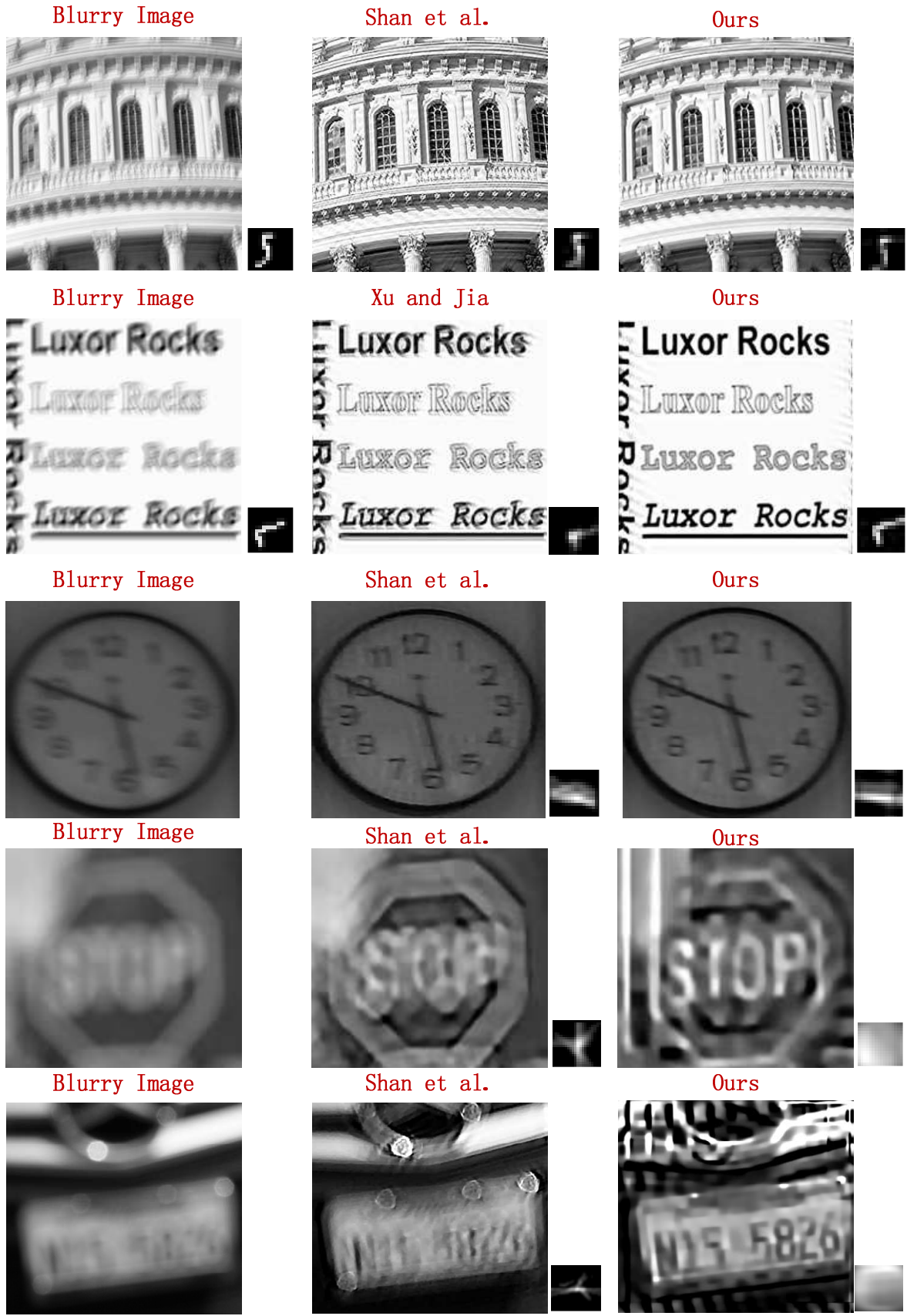}
\caption{\textbf{Comparison results, using two synthetical and three real examples.} Left: The observed blurry image. Middle: The best result among Fergus et al. \cite{Fergus:2006:RCS}, Shan et al. \cite{Shan:2008:HMD}, Cho and Lee \cite{Cho:2009:FMD}, Xu and Jia \cite{Li:2010:ECCV}, and Cho et al. \cite{Cho:2011:CVPR}. Right: Our result.  For these five examples, the parameter $\alpha$ is set as 5050, 1050, 70, 4000, and 27000, respectively; the kernel sizes are, $13\times13$, $13\times13$, $17\times17$, $17\times17$, and $31\times31$, respectively. The parameters of the baselines are also manually tuned to best.}\label{fig:res}
\end{center}
\end{figure*}
\\\textbf{Computational Cost.} Due to the developments of fast non-blind deconvolution (e.g., \cite{Krishnan:2009:NIPS}), the procedure for updating the image variable $I$ is already very fast. So we only analyze the costs for computing the Hessian matrix $H$ and updating the kernel variable $K$. For simplicity, we assume that $s_1=m_1,s_2=m_2$. Then the complexity of computing the Hessian matrix $H$ is $O(m_1^2m_2^2n_1n_2+m_1^3m_2^3)$, where $n_1,n_2$ are image sizes. In each iteration, the Toeplitz matrix $\mathcal{A}_{m_1,m_2}(I)$ need be updated. So, the complexity of the blind deconvolution procedure is $O(n_s(m_1m_2n_1n_2+m_1^3m_2^3))$, where $n_s$ is the number of iterations needed to converge. Overall, our current algorithm is not fast while deal with large kernels. Nevertheless, it is possible to speed up the algorithm by fast Fourier transforms. We leave this as future work.

\section{Experiments}
\subsection{Results}
We test with five examples (two synthetical, three real): The synthetical images are the convolution of $300\times300$ natural images and $13\times13$ synthetical blur kernels; the real images are subregions (about $350\times350$) selected from large pictures captured by a NEX-5N camera (please refer to Figure \ref{fig:deblur}). To show the advantages of the proposed method, we also test five state-of-the-art blind deconvolution algorithms, including \cite{Cho:2011:CVPR}, \cite{Cho:2009:FMD}, \cite{Fergus:2006:RCS}, \cite{Shan:2008:HMD}, and \cite{Li:2010:ECCV}.

Figure \ref{fig:res} shows the comparison results. To save space, we only show the best result of the previous algorithms for comparision. On the simple examples with easy blur kernels and tiny noise level (the first three examples in Figure \ref{fig:res}), our algorithm performs as well as the most effective baseline. While dealing with challenging cases (the last two examples in Figure \ref{fig:res}), where the blur kernels are complicated, it can be seen that our algorithm works distinctly better than the most competitive baseline. This is because that our regularizer $h^{\mathcal{L}(B)}(K)$ contains strong and accurate priors about the blur kernels, whereas the kernel priors adopted by previous algorithms are too weak to handle such difficult deblurring tasks. In particular, the forth and fifth examples illustrate that it is possible for our algorithm to successfully handle some extremely difficult cases, where the blurry images are very unclear such that human eyes are unable to recognize their contents.
\subsection{Diagnosis}
A basic assumption of our approach is that the edges of a sharp image are different before and after blurring. So our algorithm is less effective while the images contain fewer edges, as shown in Figure \ref{fig:diag:1}. Particularly, the third example of Figure \ref{fig:diag:1} shows that the ``true'' blur kernel cannot be accurately recovered if the original image has very few edges. Of course, it is not very critical to consider such images, as there is almost no difference between the sharp and blurry versions.

Note that the forth example of Figure \ref{fig:res} may have non-uniform blur kernels, since the deblurred result favors the characters ``O,P'' over ``S,T''. This is true, as verified in Figure \ref{fig:diag:2}: While the parameter $\alpha$ is varying, the shape of the estimated blur kernel changes as well. These results also reflex that our algorithm (with uniform kernel) is stable --- The algorithm does not crash even on the nonuniform cases.
\begin{figure}
\begin{center}
\includegraphics[width=0.47\textwidth]{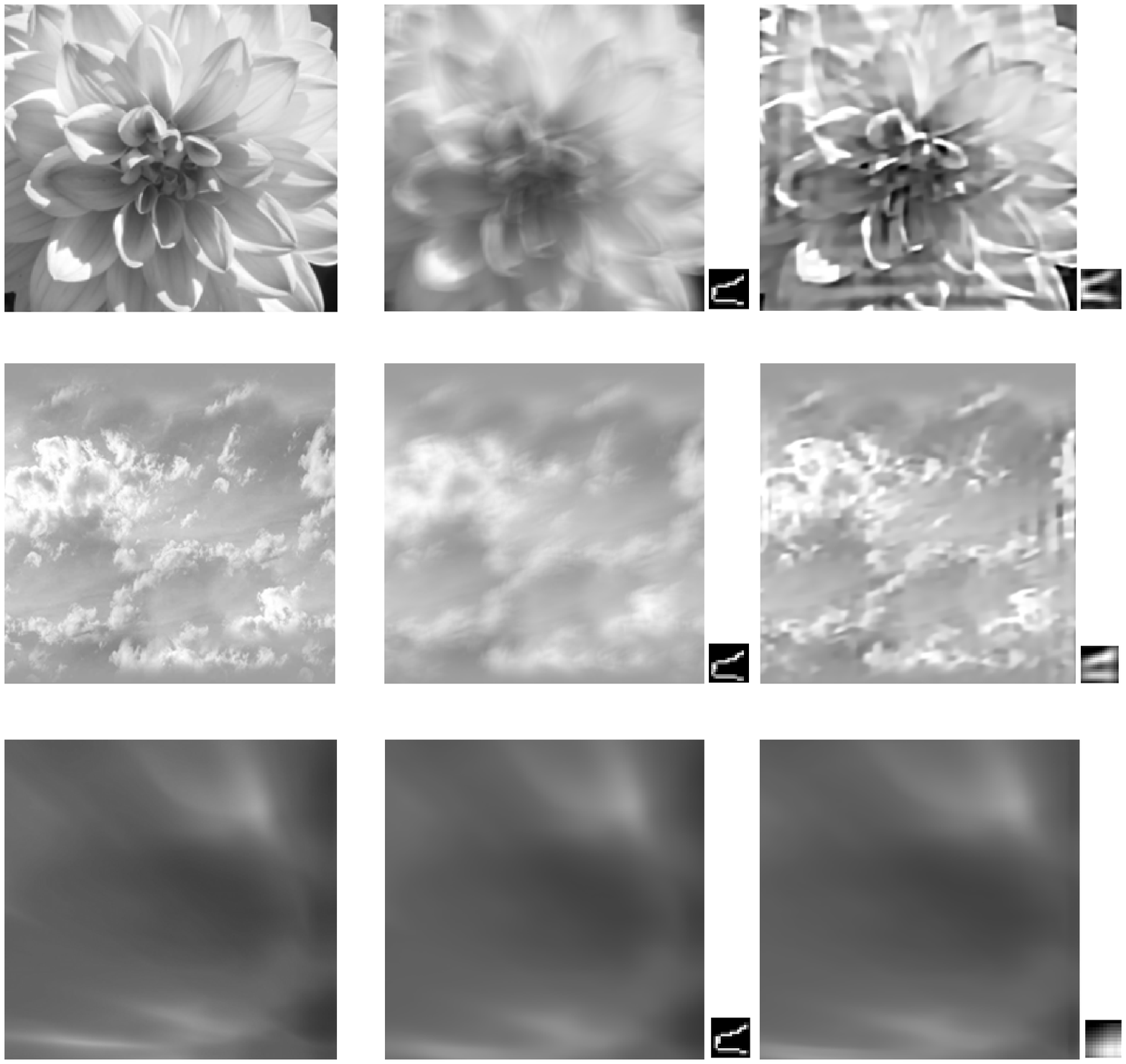}
\caption{\textbf{Some results obtained from the images with few edges.} Left: The original sharp image. Middle: The blurred version of the sharp image. Right: The deblurred version produced by our algorithm.}\label{fig:diag:1}
\end{center}
\end{figure}
\begin{figure}
\begin{center}
\includegraphics[width=0.48\textwidth]{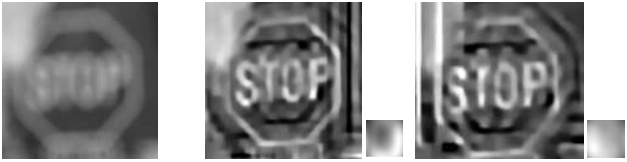}
\caption{\textbf{Non-uniform blur kernels.} Left: The observed blurry image. Middle: The deblurred version produced by our algorithm with $\alpha=1000$. Right: The deblurred version produced by our algorithm with $\alpha=4000$.}\label{fig:diag:2}
\end{center}
\end{figure}
\section{Conclusions and Discussions}
By studying how the spectrum of an image as a convolution operator changes before and after blurring, we have derived a convex regularizer on the blur kernel that depends on the given blurry image. For the blind deconvolution problem, we show that this convex regularizer is an effective prior that can deal with various types of realistic and challenging blurs (e.g., a combination of Gaussian and motion blurs). Both theoretical and experimental results have verified the validity and effectiveness of the proposed prior. Notice that our regularizer harnesses certain necessary conditions on the blur kernel, but not sufficient. So it is entirely possible there might exist other, potentially more effective image-dependent regularizers for the blur kernel.

Theorem~\ref{the:recovery:noiseless} and the empirical results in Figure \ref{fig:res:ker} even suggest that in theory, for certain class of sharp images, it might be possible to solve the blind deconvolution problem by avoiding the iterative minimization altogether. Namely, the problem might be solvable by using only two computational stable procedures: Firstly estimate the blur kernel without estimating the sharp image at all; and then recover the sharp image by performing non-blind deconvolution. However, we leave such investigations for future.
\section{Proofs}\label{sec:proof}
\subsection{Proof of the Inequality \eqref{eq:regI:convexprob}}
\begin{proof}Let $\mathcal{T}_{u,v}(\cdot)$ denote the transformation operator that shifts a 2D function $I(\cdot,\cdot)$ from $I(x,y)$ to $I(x-u,y-v)$. By the convexity of $f(\cdot)$,
\begin{eqnarray*}
f(I\otimes{}K)&=&f(\sum_{u,v}K(u,v)\mathcal{T}_{u,v}(I))\\
&\leq&\sum_{u,v}K(u,v)f(\mathcal{T}_{u,v}(I))\\
&=&f(I),
\end{eqnarray*}
where the last equality is due to the fact $f(\mathcal{T}_{u,v}(I))=f(I)$. This is naturally satisfied, since $\mathcal{T}_{u,v}(I)$ and $I$ refer to the same image.
\end{proof}
\subsection{Proof of the Inequality \eqref{eq:eigen:ineqs}}
\begin{proof}Using the ``min-max'' half of the Courant-Fisher theorem, we have that
\begin{eqnarray*}
\sigma_{i}^{\mathcal{L}}(B)&=&\min_{\substack{Y_j,\\j=1,\cdots,i-1.}}\max_{\substack{\|X\|_F=1, \\ \langle{}X,Y_j\rangle=0,\\j=1,\cdots,i-1.}}\|\mathcal{L}(B)\otimes{X}\|_F\\
&\leq&\min_{\substack{Y_j=\kappa_j^{\mathcal{L}}(I),\\j=1,\cdots,i-1.}}\max_{\substack{\|X\|_F=1, \\ \langle{}X,Y_j\rangle=0,\\j=1,\cdots,i-1.}}\|\mathcal{L}(B)\otimes{X}\|_F\\
&=&\max_{\substack{\|X\|_F=1, \\ \langle{}X,\kappa_j^{\mathcal{L}}(I)\rangle=0,\\j=1,\cdots,i-1.}}\|\mathcal{L}(B)\otimes{X}\|_F.
\end{eqnarray*}
By \eqref{eq:regI:convexprob}, we have $\|\mathcal{L}(B)\otimes{}X\|_F\leq\|\mathcal{L}(I)\otimes{}X\|_F$ and thus
\begin{eqnarray*}
\sigma_{i}^{\mathcal{L}}(B)&\leq&\max_{\substack{\|X\|_F=1, \\ \langle{}X,\kappa_j^{\mathcal{L}}(I)\rangle=0,\\j=1,\cdots,i-1.}}\|\mathcal{L}(B)\otimes{X}\|_F\\
&\leq&\max_{\substack{\|X\|_F=1, \\ \langle{}X,\kappa_j^{\mathcal{L}}(I)\rangle=0,\\j=1,\cdots,i-1.}}\|\mathcal{L}(I)\otimes{X}\|_F\\
&=&\sigma_i^{\mathcal{L}}(I),
\end{eqnarray*}
where the last equality is due to the definition of the convolution eigenvalues.
\end{proof}
\subsection{Proof of Theorem~\ref{the:recovery:noiseless}}
We first establish the following lemma:
\begin{lemma}\label{lemma:H}
If $B\neq0$, then the Hessian matrix $H$ defined by (13) is positive definite and obeys
\begin{eqnarray*}
h^{\mathcal{L}(B)}(X)\geq{}\frac{s_1s_2}{(\sigma_{max}^{\mathcal{L}}(B))^2}\|X\|_F^2,\forall{}X\in\mathbb{R}^{m_1\times{}m_2}.
\end{eqnarray*}
\end{lemma}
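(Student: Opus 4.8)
The plan is to lower-bound $h^{\mathcal{L}(B)}(X)$ termwise and then collapse the resulting sum into a clean multiple of $\|X\|_F^2$ by invoking orthonormality of the convolution eigenvectors. First I would use the ordering $\sigma_i^{\mathcal{L}}(B)\leq\sigma_{max}^{\mathcal{L}}(B)$, valid for every $i$ by definition of the first convolution eigenvalue. Replacing each denominator in the definition of $h^{\mathcal{L}(B)}$ by its largest possible value gives
\begin{eqnarray*}
h^{\mathcal{L}(B)}(X)&=&\sum_{i=1}^{s_1s_2}\frac{\|X\otimes\kappa_i^{\mathcal{L}}(B)\|_F^2}{(\sigma_i^{\mathcal{L}}(B))^2}\\
&\geq&\frac{1}{(\sigma_{max}^{\mathcal{L}}(B))^2}\sum_{i=1}^{s_1s_2}\|X\otimes\kappa_i^{\mathcal{L}}(B)\|_F^2,
\end{eqnarray*}
so that everything reduces to evaluating the sum on the right.

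The core step is to show that this sum equals exactly $s_1s_2\|X\|_F^2$. I would rewrite each summand through the Toeplitz representation \eqref{eq:convmatrix}, namely $\nu(X\otimes\kappa_i^{\mathcal{L}}(B))=\mathcal{A}_{s_1,s_2}(X)\nu(\kappa_i^{\mathcal{L}}(B))$, so that $\|X\otimes\kappa_i^{\mathcal{L}}(B)\|_F^2=\nu(\kappa_i^{\mathcal{L}}(B))^TM\nu(\kappa_i^{\mathcal{L}}(B))$ with $M=(\mathcal{A}_{s_1,s_2}(X))^T\mathcal{A}_{s_1,s_2}(X)$. Since the $\kappa_i^{\mathcal{L}}(B)$ are the right singular vectors of the Toeplitz matrix of $\mathcal{L}(B)$, the vectors $\{\nu(\kappa_i^{\mathcal{L}}(B))\}_{i=1}^{s_1s_2}$ form a complete orthonormal basis of $\mathbb{R}^{s_1s_2}$, whence $\sum_i\nu(\kappa_i^{\mathcal{L}}(B))\nu(\kappa_i^{\mathcal{L}}(B))^T$ is the identity. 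Summing and applying the cyclic property of the trace then collapses the sum to $\trace{M}=\|\mathcal{A}_{s_1,s_2}(X)\|_F^2$. The last ingredient is the elementary identity $\|\mathcal{A}_{s_1,s_2}(X)\|_F^2=s_1s_2\|X\|_F^2$, which I would read off from the column structure of the Toeplitz matrix: each of its $s_1s_2$ columns is a zero-padded shifted copy of $X$ and hence has squared norm $\|X\|_F^2$.

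Combining the three steps yields $h^{\mathcal{L}(B)}(X)\geq\frac{s_1s_2}{(\sigma_{max}^{\mathcal{L}}(B))^2}\|X\|_F^2$, and positive definiteness of $H$ is then immediate: the hypothesis $B\neq0$ forces $\sigma_{max}^{\mathcal{L}}(B)>0$ by \eqref{eq:eigenmin}, so the right-hand side is strictly positive whenever $X\neq0$; since $h^{\mathcal{L}(B)}(X)=(\nu(X))^TH\nu(X)$, this shows $H$ is positive definite.

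I expect the only delicate points to be the bookkeeping in the middle step rather than any genuine difficulty. Specifically, I would need to confirm that the $s_1s_2$ convolution eigenvectors really do span all of $\mathbb{R}^{s_1\times s_2}$ (guaranteed because the Toeplitz matrix of $\mathcal{L}(B)$ is nonsingular for $B\neq0$), so that the outer-product sum is exactly the identity — this is what makes the collapse an equality and keeps the constant sharp — and to track dimensions carefully for the non-square Toeplitz matrix when counting column norms. Since no inequality is lost in passing from the sum to $s_1s_2\|X\|_F^2$, the factor $s_1s_2$ emerges tight; the only genuine slack is the uniform bound $\sigma_i^{\mathcal{L}}(B)\leq\sigma_{max}^{\mathcal{L}}(B)$ used in the first step.
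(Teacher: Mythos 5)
Your proof is correct and follows essentially the same route as the paper's: bound each denominator by $(\sigma_{max}^{\mathcal{L}}(B))^2$, then show the remaining sum equals exactly $s_1s_2\|X\|_F^2$ by combining completeness of the $s_1s_2$ orthonormal convolution eigenvectors with a counting identity coming from the Toeplitz structure. The only difference is bookkeeping: the paper decomposes the matrices $\mathcal{A}_{m_1,m_2}(\kappa_i^{\mathcal{L}}(B))$ row-wise via binary selection matrices $\Omega_j$ with $\sum_j(\Omega_j)^T\Omega_j=s_1s_2\mathcal{I}$ and computes the partial Hessian $H_1=s_1s_2\mathcal{I}$ directly, whereas you transpose the computation onto $\mathcal{A}_{s_1,s_2}(X)$ and use trace cyclicity plus the column-norm count $\|\mathcal{A}_{s_1,s_2}(X)\|_F^2=s_1s_2\|X\|_F^2$ --- the same two ingredients in dual form.
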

\begin{proof} It could be calculated that
\begin{align*}
&h^{\mathcal{L}(B)}(X) = \sum_{i=1}^{s_1s_2}\frac{\|X\otimes\kappa_i^{\mathcal{L}}(B)\|_F^2}{(\sigma_i^{\mathcal{L}}(B))^2}\\ &\geq\frac{1}{(\sigma_{max}^{\mathcal{L}}(B))^2}\sum_{i=1}^{s_1s_2}\|X\otimes\kappa_i^{\mathcal{L}}(B)\|_F^2\doteq\frac{(\nu(X))^TH_1\nu(X)}{(\sigma_{max}^{\mathcal{L}}(B))^2},
\end{align*}
where $H_1=\sum_{i=1}^{s_1s_2}(\mathcal{A}_{m_1,m_2}(\kappa_i^{\mathcal{L}}(B)))^T\mathcal{A}_{m_1,m_2}(\kappa_i^{\mathcal{L}}(B))$. The Toeplitz matrix $\mathcal{A}_{m_1,m_2}(\kappa_i^{\mathcal{L}}(B))$ is a linear operator of $\kappa_i^{\mathcal{L}}(B)$ and can be explicitly written as
\begin{eqnarray*}
\mathcal{A}_{m_1,m_2}(\kappa_i^{\mathcal{L}}(B))=[(\nu(\kappa_i^{\mathcal{L}}(B)))^T\Omega_1;(\nu(\kappa_i^{\mathcal{L}}(B)))^T\Omega_2;\cdots],
\end{eqnarray*}
where $\{\Omega_j\}_{j=1}^{(m_1+s_1-1)(m_2+s_2-1)}$ is a set of binary matrices and satisfies $\sum_{j}(\Omega_j)^T\Omega_j=s_1s_2\mathcal{I}$, and $\mathcal{I}$ is the $m_1\times{}m_2$ identity matrix. Hence,
\begin{align*}
H_1&=\sum_{i}\sum_{j}(\Omega_j)^T\nu(\kappa_i^{\mathcal{L}}(B))(\nu(\kappa_i^{\mathcal{L}}(B)))^T\Omega_j\\
&=\sum_{j}(\Omega_j)^T(\sum_{i}\nu(\kappa_i^{\mathcal{L}}(B))(\nu(\kappa_i^{\mathcal{L}}(B)))^T)\Omega_j\\
&=\sum_{j}(\Omega_j)^T\Omega_j,\\
&=s_1s_2\mathcal{I},
\end{align*}
which finishes the proof.
\end{proof}
\begin{proof}{\textbf{of Theorem~\ref{the:recovery:noiseless}}}
By \eqref{eq:prior:k0} and the convexity of the function $h^{\mathcal{L}(B)}(\cdot)$,
\begin{eqnarray*}
h^{\mathcal{L}(B)}(K_0-\hat{K}_0)&\leq&h^{\mathcal{L}(B)}(K_0)+h^{\mathcal{L}(B)}(\hat{K}_0)\\
&\leq&2h^{\mathcal{L}(B)}(K_0)\\
&\leq&\frac{2s_1s_2}{(\sigma_{min}^{\mathcal{L}}(I_0))^2}.
\end{eqnarray*}
By Lemma~\ref{lemma:H}, we also have
\begin{eqnarray*}
h^{\mathcal{L}(B)}(K_0-\hat{K}_0)\geq\frac{s_1s_2}{(\sigma_{max}^{\mathcal{L}}(B))^2}\|K_0-\hat{K}_0\|_F^2.
\end{eqnarray*}
Hence, $\|K_0-\hat{K}_0\|_F^2\leq2(\sigma_{max}^{\mathcal{L}}(B))^2/(\sigma_{min}^{\mathcal{L}}(I_0))^2.$
\end{proof}
\subsection{Proof of Corollary~\ref{cor:unique}}
\begin{proof} Since $\hat{K}_0$ is deterministic, we have
\begin{eqnarray*}
\|K_0'-K_0''\|_F&=&\|(K_0'-\hat{K}_0)+(\hat{K}_0-K_0'')\|_F\\
&\leq&\|K_0'-\hat{K}_0\|_F+\|\hat{K}_0-K_0''\|_F\\
&\leq&\sqrt{2}\frac{\sigma_{max}^{\mathcal{L}}(B)}{\sigma_{min}^{\mathcal{L}}(I_0')}+\sqrt{2}\frac{\sigma_{max}^{\mathcal{L}}(B)}{\sigma_{min}^{\mathcal{L}}(I_0'')}\\
&\leq&\frac{\varepsilon}{2}+\frac{\varepsilon}{2}=\varepsilon.
\end{eqnarray*}
\end{proof}
\subsection{Proof of Theorem~\ref{the:recovery:noisy}}
We need the following lemma to accomplish the proof.
\begin{lemma} For any two matrices $X$ and $Y$, we have
\begin{eqnarray*}
\|X\otimes{}Y\|_F&\leq&\|X\|_F\|Y\|_1,
\end{eqnarray*}
where $\|\cdot\|$ is the $\ell_1$-norm of a matrix.
\end{lemma}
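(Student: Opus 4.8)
The plan is to realize the convolution as a superposition of shifted copies of $X$, weighted by the entries of $Y$, and then invoke the triangle inequality for the Frobenius norm; this is exactly the representation already exploited in the proof of inequality \eqref{eq:regI:convexprob}. Concretely, I would rewrite the convolution formula $(X\otimes{}Y)(i,j)=\sum_{u,v}X(i-u,j-v)Y(u,v)$ using the shift operator $\mathcal{T}_{u,v}(\cdot)$ introduced there, obtaining
\begin{eqnarray*}
X\otimes{}Y=\sum_{u,v}Y(u,v)\,\mathcal{T}_{u,v}(X).
\end{eqnarray*}

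Applying the triangle inequality for $\|\cdot\|_F$ to this finite sum gives $\|X\otimes{}Y\|_F\leq\sum_{u,v}|Y(u,v)|\,\|\mathcal{T}_{u,v}(X)\|_F$. I would then use the fact that a shift merely relocates the entries of $X$ within the (larger) output grid of the convolution without truncating any of them, so that $\|\mathcal{T}_{u,v}(X)\|_F=\|X\|_F$ for every $(u,v)$. Factoring this common constant out of the sum yields
\begin{eqnarray*}
\|X\otimes{}Y\|_F\leq\|X\|_F\sum_{u,v}|Y(u,v)|=\|X\|_F\|Y\|_1,
\end{eqnarray*}
which is the claimed bound.

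There is no serious obstacle here; the statement is essentially the discrete form of Young's convolution inequality $\|f*g\|_2\leq\|f\|_2\|g\|_1$. The only point needing a moment's care is the shift-invariance used above: one must confirm that, under the full-convolution convention fixed in the preliminaries (where the output of $\mathcal{A}_{k_1,k_2}(\cdot)$ has enlarged dimensions), every shifted copy $\mathcal{T}_{u,v}(X)$ lies entirely inside the output domain, so that no entry of $X$ is lost to the boundary and the Frobenius norm is preserved exactly rather than merely bounded.
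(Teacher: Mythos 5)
Your proof is correct, but it follows a genuinely different route from the paper's. The paper does not expand the convolution into shifted copies at this point; instead it splits $Y$ into its nonnegative part $Y^{+}$ and negative part $Y^{-}$, applies the triangle inequality to $X\otimes{}Y = X\otimes{}Y^{+}+X\otimes{}Y^{-}$, and bounds each term by invoking the already-established blurring inequality \eqref{eq:regI:convexprob} with $f=\|\cdot\|_F$, applied implicitly to the normalized kernels $Y^{+}/\|Y^{+}\|_1$ and $-Y^{-}/\|Y^{-}\|_1$, which lie in the simplex $\mathcal{S}$. That sign splitting is forced on the paper precisely because \eqref{eq:regI:convexprob} is stated only for kernels that are nonnegative with unit sum. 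You instead go back to the superposition $X\otimes{}Y=\sum_{u,v}Y(u,v)\,\mathcal{T}_{u,v}(X)$ and apply the triangle inequality with the absolute values $|Y(u,v)|$ directly, so signed entries need no special treatment; combined with the shift invariance $\|\mathcal{T}_{u,v}(X)\|_F=\|X\|_F$ (valid under the full, zero-padded convolution convention, as you correctly flag), this yields the discrete Young inequality in one step. What your approach buys is self-containedness and generality: it is the standard argument and works verbatim for any shift-invariant norm, with no appeal to the paper's convexity machinery. What the paper's approach buys is economy within its own framework: given \eqref{eq:regI:convexprob}, only the two-line sign decomposition is new. In the end the two proofs rest on the same underlying fact, since the paper's proof of \eqref{eq:regI:convexprob} is itself the shift decomposition plus convexity that you deploy directly.
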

\begin{proof}
Decompose $Y$ into the sum of its negative part (denoted as $Y^{-}$) and nonnegative part (denoted as $Y^{+}$). By \eqref{eq:regI:convexprob} and the convexity of matrix norms,
\begin{eqnarray*}
\|X\otimes{}Y\|_F&=&\|X\otimes{}Y^{+}+X\otimes{}Y^{-}\|_F\\
&\leq&\|X\otimes{}Y^{+}\|_F+\|X\otimes{}Y^{-}\|_F\\
&\leq&\|X\|_F\|Y^{+}\|_1+\|X\|_F\|Y^{-}\|_1\\
&=&\|X\|_F\|Y\|_1.
\end{eqnarray*}
\end{proof}

\begin{proof}{\textbf{Of Theorem~\ref{the:recovery:noisy}}}
In noisy case, the inequality \eqref{eq:temp1} needs be changed to
\begin{eqnarray*}
\|K_0\otimes\kappa_i^{\mathcal{L}}(B)\|_F&\leq&\frac{\sigma_i^{\mathcal{L}}(B)+\|\mathcal{L}(N)\otimes{}\kappa_i^{\mathcal{L}}(B)\|_F}{\sigma_{min}^{\mathcal{L}}(I_0)}\\
&\leq&\frac{\sigma_i^{\mathcal{L}}(B)+\epsilon\|\kappa_i^{\mathcal{L}}(B)\|_1}{\sigma_{min}^{\mathcal{L}}(I_0)}\\
&\leq&\frac{\sigma_i^{\mathcal{L}}(B)+\sqrt{s_1s_2}\epsilon}{\sigma_{min}^{\mathcal{L}}(I_0)}.
\end{eqnarray*}
Thus the inequality \eqref{eq:prior:k0} becomes
\begin{eqnarray*}
h^{\mathcal{L}(B)}(K_0)&\leq&\sum_{i}\frac{(1+\frac{\sqrt{s_1s_2}\epsilon}{\sigma_i^{\mathcal{L}}(B)})^2}{(\sigma_{min}^{\mathcal{L}}(I_0))^2}\\
&\leq&\frac{s_1s_2(1+\frac{\sqrt{s_1s_2}\epsilon}{\sigma_{min}^{\mathcal{L}}(B)})^2}{(\sigma_{min}^{\mathcal{L}}(I_0))^2}.
\end{eqnarray*}
Then by following the proof procedure of Theorem~\ref{the:recovery:noiseless}, this proof can be finished.
\end{proof}
\bibliographystyle{IEEETran}

\bibliography{ref}
\end{document}